\documentclass[letterpaper,10pt,conference]{ieeeconf}
\IEEEoverridecommandlockouts
\usepackage{amsmath,amssymb}
\usepackage{bm}

\usepackage{amsthm}
\usepackage{cite}
\usepackage{graphicx}
\usepackage{url}
\usepackage{tikz}
\usepackage{tikz-qtree}
\usetikzlibrary{positioning,automata,backgrounds}
\usepackage{xcolor}
\usepackage{algorithm}
\usepackage[noend]{algpseudocode} 
\usepackage{subcaption} 
\usepackage{mathtools}
\usepackage{hyperref}
\usepackage{booktabs}
\usepackage{multirow}
\usepackage{bbm}
\usepackage{siunitx}
\usepackage{empheq} 
\usepackage{makecell}  

\theoremstyle{plain}
\newtheorem{theorem}{Theorem}
\newtheorem{lemma}{Lemma}
\newtheorem{proposition}{Proposition}

\newtheorem{definition}{Definition}

\newtheorem{problem}{Problem}
\newtheorem{remark}{Remark}

\newcommand{\xb}{\bm{x}}
\newcommand{\zb}{\bm{z}}
\newcommand{\bb}{\bm{b}}
\newcommand{\ub}{\bm{u}}

\newcommand{\ob}{\bm{o}}
\newcommand{\obpt}{\tilde{\ob}}

\newcommand{\fb}{\bm{f}}
\newcommand{\gb}{\bm{g}}
\newcommand{\Deltab}{\bm{\Delta}}
\newcommand{\indicator}{\mathbbm{1}}

\begin{document}

\title{Risk-Aware Belief Control Barrier Functions over Random Finite Sets}

\author{
Shaohang Han$^{*1}$, Gang Chen$^{2}$, Yixi Cai$^{1}$, Ignacio Torroba$^{3}$, Ivan Stenius$^{3}$, \\ Patric Jensfelt$^{1}$, Javier Alonso-Mora$^{2}$, and Jana Tumova$^{1}$
\thanks{*Corresponding author: \texttt{shaohang@kth.se}}
\thanks{$^{1}$Department of Robotics, Perception and Learning, KTH Royal Institute of Technology, Stockholm, Sweden}
\thanks{$^{2}$Department of Cognitive Robotics, Delft University of Technology, Delft, The Netherlands}
\thanks{$^{3}$Division of Aerospace, Moveability and Naval Architecture, KTH Royal Institute of Technology, Stockholm, Sweden}
\thanks{This work was partially supported by the Wallenberg AI, Autonomous Systems and Software Program (WASP) funded by the Knut and Alice Wallenberg Foundation. This work was also partially supported by Digital Futures, Vinnova and FMV through the SHARCEX grant. The authors are also affiliated with Digital Futures.}
}

\maketitle
\thispagestyle{empty}
	
\begin{abstract}
Ensuring robot safety in unknown, dynamic environments is a fundamental requirement. It involves inferring the states of an unknown and time-varying number of moving objects from noisy, incomplete measurements. We address safe control under the induced multi-object state uncertainty with a risk-aware belief control barrier function (BCBF) framework. The uncertainty is captured by a random finite set (RFS) belief, estimated by a sequential Monte Carlo probability hypothesis density (SMC-PHD) filter that represents it with a set of particles. Building directly on these particles, we construct a nonsmooth BCBF, establish forward invariance of the safe set under continuous prediction, and derive an explicit condition under which discrete updates preserve safety. Simulation and real-world underwater experiments demonstrate the effectiveness and efficiency of the proposed approach.
\end{abstract}

\section{Introduction}
Safety is a fundamental requirement in robotics, often formulated as a \textit{set invariance} problem: keeping the system state within a safe set that does not intersect the failure set \cite{cohen2024safety}. Control Barrier Functions (CBFs) provide a principled framework for synthesizing controllers that render safe sets forward invariant, typically via a quadratic program (CBF-QP) \cite{cohen2024safety,ames2019control,glotfelter2017nonsmooth}. Standard CBFs assume perfect state information \cite{ames2019control,glotfelter2017nonsmooth}, but real-world robotic systems operate under state uncertainty arising, e.g., from noisy and incomplete measurements. An effective way to reason about such uncertainty is through Bayesian inference, which provides a \textit{belief}: a probability distribution over possible states \cite{thrun2006probabilistic}. Building on this idea, recent work has introduced belief CBFs (BCBFs), which ensure safety using the beliefs produced by Kalman filters (KFs) and particle filters (PFs) \cite{10310096,10611412,11312006}.

These BCBFs assume the environment has a known and fixed number of objects.  In practice this is restrictive: limited sensor range and occlusions cause objects to enter and leave the scene. We therefore consider settings having an unknown and time-varying number of \emph{indistinguishable} moving objects.
This multi-object setting introduces two new challenges: the number of objects varies, and measurement-to-object associations are ambiguous. Both challenges are better handled by a belief over a random finite set (RFS), whose cardinality and elements are random.
A commonly used RFS-based estimator is the probability hypothesis density (PHD) filter. It approximates the multi-object belief as a Poisson point process (PPP) and propagates its first-order moment \cite{mahler2004multitarget}. In particular, the sequential Monte Carlo implementation of the PHD (SMC-PHD) filter \cite{vo2005sequential,ristic2010improved} has proven effective in nonlinear, non-Gaussian, and complex robotic environments \cite{chen2024continuous,ristic2010sensor,dames2017detecting}.

Despite the advantages of the SMC-PHD filter as a state estimator, incorporating its beliefs into a CBF framework remains challenging. 
First, state space safety specifications do not directly translate to belief space, where safety should be reasoned about probabilistically over distributions of states. This requires us to design a CBF in belief space with risk-aware guarantees.
Second, the belief space dynamics are hybrid in nature, combining continuous prediction with discrete update, which makes the analysis of forward invariance difficult.
Third, SMC-PHD filters typically represent the belief by a large number of particles \cite{vo2005sequential,ristic2010improved,chen2024continuous}, yielding a high-dimensional representation that complicates the synthesis of computationally efficient controllers.

\begin{figure}[t]
\centering
\includegraphics[width=0.9\linewidth]{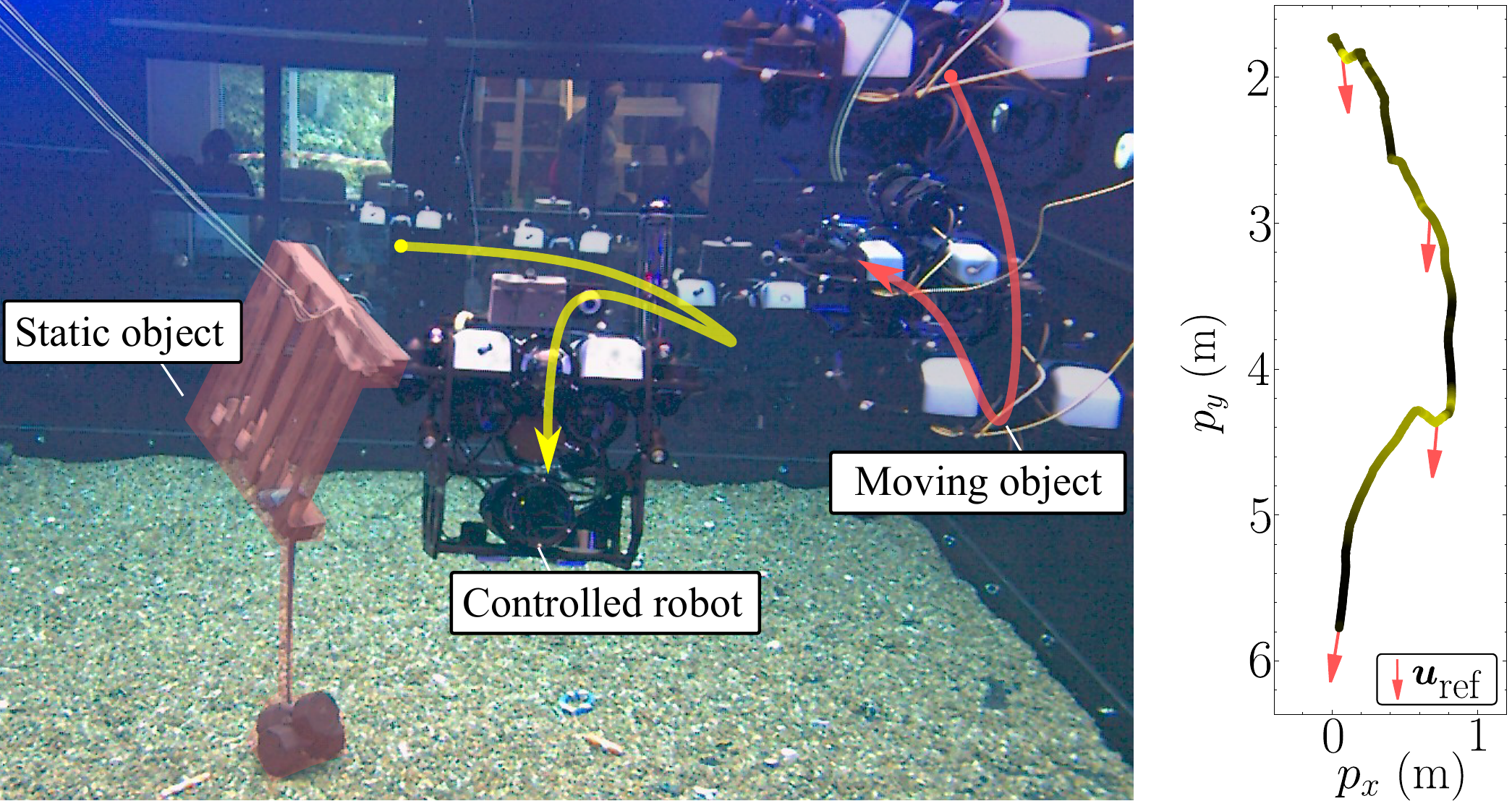}
\captionsetup{font=footnotesize}
\caption{Real-world experiment within a test tank. \textbf{Left:} A BlueROV2 equipped with a 
\href{https://www.waterlinked.com/shop/wl-21045-2-sonar-3d-15-689}{3D sonar}
must keep a safe distance from both the static wooden laths (highlighted with a red mask) and another moving ROV. \textbf{Right:} The recorded trajectory. A teleoperator commands the robot along the $-y$ axis ignoring the obstacles, while the BCBF-QP modifies this reference to ensure safety. The heatmap encodes the modification magnitude, with lighter yellow indicating larger changes. Red arrows show the reference velocity at selected timestamps. 
}
\vspace{-5mm}
\label{fig:hardware_validation}
\end{figure}

To address these challenges, we leverage the PPP structure of the PHD belief to construct a risk-aware BCBF directly on the particles. We then derive the corresponding nonsmooth CBF conditions \cite{glotfelter2017nonsmooth}, establish forward invariance under continuous prediction, and give an explicit condition for certifying safety across discrete filter updates. To handle the high-dimensional particle representation, we reformulate the CBF conditions into a more tractable form and exploit parallelism in constructing the BCBF-QP, achieving an average control computation time of less than $5$~ms. We validate our approach in simulation on two safe robot control applications: field-of-view (FOV) maintenance and obstacle avoidance in unstructured dynamic environments. We further demonstrate the latter on underwater hardware experiments, as shown in Fig.~\ref{fig:hardware_validation}.

\section{Related Work}
\subsection{CBFs under State Uncertainty}
Existing work characterizes measurement noise from samples and develops CBFs that are robust to it, using distributionally robust optimization (DRO) \cite{long2026sensor} or learning-based approaches \cite{li2023robust}. These methods handle noisy sensing but assume access to the full state, which is typically only partially observed in practice. For instance, the CBF condition may depend on velocity while the robot only measures position. A natural remedy is to estimate the full state and account for the resulting state uncertainty in CBFs. Following this idea, prior works extend CBFs to handle bounded state estimation errors \cite{wang2022observer,dean2021guaranteeing}. Observer-based CBFs derive these bounds from the error dynamics \cite{wang2022observer}, whereas measurement-robust CBFs obtain them from data \cite{dean2021guaranteeing}. However, both methods enforce safety against the worst-case error within the bound, which can be conservative compared to incorporating unbounded stochastic uncertainty \cite{akella2025risk}. Recent work on BCBFs \cite{10310096,10611412,11312006} accounts for this stochastic uncertainty but remains limited to a known, fixed number of objects in the environment.



\subsection{PHD Filters in Robotics}
The PHD filter has been applied to robot localization \cite{atanasov2016localization}, mapping \cite{chen2024continuous}, simultaneous localization and mapping \cite{gao2020random}, and multi-object search and tracking \cite{ristic2010sensor,dames2017detecting}. For multi-object state estimation, unlike KFs and PFs, the RFS-based PHD filter does not require explicit measurement-to-object association. This is particularly advantageous when the number of objects is large \cite{dames2017detecting,chen2024continuous}. The trade-off is that individual object identities are not maintained. This is not restrictive for the safe control tasks we target, where the safety specification treats all objects identically.

\section{Preliminaries}
We consider a robot modeled by the control-affine dynamics of the form
\begin{equation}
\label{eq:sys_robot}
    \dot{\xb} = \fb(\xb) + \gb(\xb)\ub,
\end{equation}
where $\xb \in \mathcal{X} \subseteq \mathbb{R}^{n_x}$ is the robot state, $\ub \in \mathcal{U} \subseteq \mathbb{R}^{n_u}$ is the control input, and the drift and input vector fields are given by $\fb:\mathbb{R}^{n_x}\to\mathbb{R}^{n_x}$ and $\gb:\mathbb{R}^{n_x}\to\mathbb{R}^{n_x \times n_u}$.

The environment contains an unknown number of objects, sharing the same dynamical and measurement models:
\begin{align}
\label{eq:sys_obj}
    \dot{\ob} &= \bm{\xi}(\ob), \\
\label{eq:sys_obj_measurement}
    \zb &= \bm{\ell}(\xb, \ob, \bm{\nu}).
\end{align}
We let $\ob \in \mathcal{O} \subseteq \mathbb{R}^{n_o}$ denote the object state, with dynamics $\bm{\xi}:\mathbb{R}^{n_o}\to\mathbb{R}^{n_o}$. The observation is $\zb \in \mathcal{Z} \subseteq \mathbb{R}^{n_z}$, generated by the measurement function $\bm{\ell}:\mathcal{X} \times \mathcal{O} \times \mathbb{R}^{n_\nu} \to \mathbb{R}^{n_z}$ under measurement noise $\bm{\nu} \in \mathbb{R}^{n_\nu}$. 

\subsection{Nonsmooth Control Barrier Functions}
Consider the dynamics defined in~(\ref{eq:sys_robot}) and~(\ref{eq:sys_obj}). Let the set ${\mathcal{C} \subseteq \mathbb{R}^{n_x} \times \mathbb{R}^{n_o}}$ be the closed zero super-level set of a locally Lipschitz function $h:\mathbb{R}^{n_x} \times \mathbb{R}^{n_o} \to \mathbb{R}$, defined as
\begin{equation}
    \begin{aligned}
        \mathcal{C} &:= \{ (\xb,\ob) \in \mathcal{X} \times \mathcal{O} \mid h(\xb,\ob) \geq 0 \}, \\
        \partial \mathcal{C} &:= \{ (\xb,\ob) \in \mathcal{X} \times \mathcal{O} \mid h(\xb,\ob) = 0 \}.
    \end{aligned}
    \label{eq:safe_set}
\end{equation}

\begin{definition}[Safety]
A set $\mathcal{C}$ is forward invariant if for every initial condition $\left(\xb(0),\ob(0)\right) \in \mathcal{C}$, it holds that $\left(\xb(t),\ob(t)\right) \in \mathcal{C}$ for all $t \geq 0$. The systems~\eqref{eq:sys_robot} and~\eqref{eq:sys_obj} are safe with respect to $\mathcal{C}$ if $\mathcal{C}$ is forward invariant.
\end{definition}

If the systems~\eqref{eq:sys_robot} and~\eqref{eq:sys_obj} are safe with respect to $\mathcal{C}$, we refer to $\mathcal{C}$ as the \emph{safe set} and the function $h$ as a \emph{safety function}.
To synthesize a controller that keeps the system safe, we employ nonsmooth CBFs. 


\begin{definition}[Nonsmooth Control Barrier Function \cite{glotfelter2017nonsmooth}]
Let $\mathcal{C}$ be defined as in~\eqref{eq:safe_set}. $h$ is a nonsmooth control barrier function if there exists an extended class-$\mathcal{K}_{\infty}$ function $\alpha$ such that for all $(\xb,\ob)\in\mathcal{X}\times\mathcal{O}$,
\begin{equation*}
\label{eq:ncbf}
\sup_{\ub \in \mathcal{U}} \ \inf_{\zeta \in \partial h(\xb,\ob)}
\zeta^\top
\begin{bmatrix} \fb(\xb)+\gb(\xb)\ub \\ \bm{\xi}(\ob) \end{bmatrix}
\ge
-\alpha\bigl(h(\xb,\ob)\bigr),
\end{equation*}
where $\partial h(\xb,\ob)$ denotes the Clarke generalized gradient of $h$ at $(\xb,\ob)$. A detailed definition can be found in \cite{glotfelter2017nonsmooth}.
\end{definition}

\subsection{Random Finite Sets and PHD Filters}
\label{sec:rfs_phd}
When the robot operates in an unknown and dynamic environment, neither the number of objects in the environment nor their states are known a priori. We therefore employ RFSs to represent multi-object states. Formally, we let $\mathbb{O}$ denote the object RFS, and let $\ob^{(i)} \in \mathcal{O}$ denote the state of its $i$-th element, then we have $\mathbb{O}=\{\ob^{(1)},\ob^{(2)},\ldots,\ob^{(n)}\}$. The cardinality $n\in\mathbb{N}$ is itself a random variable. The first moment of a distribution over an RFS is called the PHD, denoted by $D(\ob)$. As a density over the single-object state space, the PHD satisfies, for any subset $\mathcal{F} \subseteq \mathcal{O}$,
\begin{equation}
\label{eq:phd_inte}
 \int_{\ob \in \mathcal{F}} D(\ob)\, d\ob = \mathbb{E}\bigl[|\mathbb{O} \cap \mathcal{F}|\bigr],
\end{equation}
where $|\cdot|$ denotes set cardinality, i.e., the integral of the PHD over $\mathcal{F}$ equals the expected number of objects in $\mathcal{F}$. 

The robot receives a set of sensor measurements $\mathbb{Z}=\{\zb^{(1)},\zb^{(2)},\ldots,\zb^{(m)}\}$, where the number of measurements $m\in\mathbb{N}$ varies over time. Using these measurements, the PHD filter~\cite{mahler2004multitarget} recursively estimates the multi-object posterior. 
Analogous to how a Gaussian distribution underlies the KF, the PHD filter approximates the multi-object posterior by a PPP and assumes the objects are independent. Since a PPP is fully characterized by its first order moment, the multi-object belief reduces to the PHD. In the SMC implementation~\cite{vo2005sequential,ristic2010improved}, the PHD is approximated by weighted particles:
\begin{equation}
\label{eq:phd_smc}
    D(\ob) \approx \sum_{i=1}^{L} w^{(i)} \delta(\ob - \obpt^{(i)}),
\end{equation}
where $L$ is the number of particles, $w^{(i)}$ and $\obpt^{(i)}$ are the estimated weight and state of the $i$-th particle, and $\delta(\cdot)$ denotes the Dirac delta function. As measurements arrive only at discrete time steps $t_1,\ldots,t_k,\ldots$, the PHD filter consists of continuous prediction over the interval $t\in [t_{k-1},t_k)$ and a discrete update at $t=t_k$. The discrete update incorporates new measurements, followed by particle birth, weight update, and resampling. After resampling, we obtain a particle set $\{(\obpt^{(i)}, w)\}_{i=1}^L$ with a uniform weight $w := \sum_{i=1}^{L} w^{(i)} / L$. We refer the reader to \cite{vo2005sequential,ristic2010improved} for further details.

\subsection{Poisson Point Process}
\label{sec:ppp}
We now introduce the PPP and its key properties \cite{last2018lectures}.
\begin{definition}[Poisson Point Process]
Let $N$ be a random process on $\mathbb{R}^n$ such that, for each set $\mathcal{F} \subseteq \mathbb{R}^n$, $N(\mathcal{F})$ denotes the random number of points that lie in $\mathcal{F}$. Then $N$ is a PPP with intensity $\lambda\colon \mathbb{R}^n \to \mathbb{R}_{\geq 0}$ if:
\begin{enumerate}
    \item For every set $\mathcal{F} \subseteq \mathbb{R}^n$, the number of points in $\mathcal{F}$ is a Poisson random variable with distribution
    \begin{equation}
    \label{eq:ppp_pdf}
        \Pr\bigl(N(\mathcal{F}) = m\bigr)
        = \frac{\Lambda(\mathcal{F})^m \exp\bigl(-\Lambda(\mathcal{F})\bigr)}{m!},
    \end{equation}
    where $\Lambda(\mathcal{F}) := \int_{\ob \in \mathcal{F}} \lambda(\ob)\, d\ob$,
    and $m \in \mathbb{N}$.
    \item For any $k$ disjoint sets $\mathcal{F}_1, \ldots, \mathcal{F}_k \subseteq \mathbb{R}^n$, the random variables
    $N(\mathcal{F}_1), \ldots, N(\mathcal{F}_k)$ are independent.
\end{enumerate}
\end{definition}

\begin{remark}
\label{rem:ppp_intensity}
In the SMC-PHD filter, the PPP intensity $\lambda$ is the PHD $D(\ob)$, approximated by the particles in~\eqref{eq:phd_smc}.
\end{remark}

Setting $m=0$ in~\eqref{eq:ppp_pdf} gives the void probability of the PPP:
\begin{equation}
\label{eq:void_probability}
\Pr[N(\mathcal{F})=0]=e^{-\Lambda(\mathcal{F})}.
\end{equation}
This is the probability that $\mathcal{F}$ contains no objects, which we use to reformulate the safety specification.

\section{Problem Formulation}
\label{sec:problem_formulation}
We consider a robot governed by~\eqref{eq:sys_robot} operating in an environment containing an unknown and time-varying number of indistinguishable objects, each evolving according to~\eqref{eq:sys_obj}. The robot state is assumed known for simplicity, so that the focus is on the objects' state uncertainty. The multi-object state is modeled as an RFS $\mathbb{O}$ with the standard assumption that the objects are independent \cite{mahler2004multitarget,vo2005sequential,ristic2010improved}.

In the single-object setting, safety with respect to an object $\ob$ is specified by a single-object smooth safety function $h_o$, with the safe set $\mathcal{C}_o := \{(\xb, \ob) \in \mathcal{X} \times \mathcal{O} : h_o(\xb, \ob) \geq 0\}$. In the multi-object setting, we use the same $h_o$ for every object in $\mathbb{O}$. The robot is therefore safe if $h_o(\xb, \ob^{(i)}) \geq 0$ for every $\ob^{(i)} \in \mathbb{O}$. Since $\mathbb{O}$ is an RFS, we require this safety specification to hold probabilistically, i.e., in a \emph{risk-aware} sense at level $\tau$, formalized as the following problem.

\begin{problem}
\label{prob:belief_safe_control}
Given the robot dynamics~\eqref{eq:sys_robot}, the object dynamics~\eqref{eq:sys_obj}, the single-object safe set $\mathcal{C}_o$, and the multi-object RFS $\mathbb{O}$, synthesize control inputs such that at any time
\begin{equation}
\label{eq:problem_chance_constraint}
    \Pr\!\big[\,h_o(\xb, \ob^{(i)}) \geq 0,\ \forall \ob^{(i)} \in \mathbb{O}\,\big] \geq 1 - \tau
\end{equation}
for all $t \geq 0$, where $\tau \in (0,1)$ is a user-specified risk level, while remaining close to a reference control input $\ub_{\mathrm{ref}}$. 
\end{problem}

\begin{remark}
$\tau$ is the maximum probability that the safety specification is violated. A lower risk level $\tau$ corresponds to a stricter safety requirement, trading conservativeness for stronger robustness.
\end{remark}

To deal with the chance constraint in Problem~\ref{prob:belief_safe_control}, we leverage the SMC-PHD filter, which represents the multi-object belief by a set of weighted particles. The probability in~\eqref{eq:problem_chance_constraint} is thus evaluated under the PHD belief. Using this representation, we cast Problem~\ref{prob:belief_safe_control} as a set invariance problem in belief space. Specifically, we aim to design a safe set over the particle-based belief with risk-aware guarantees and synthesize a safe controller that renders this set forward invariant.

\section{Risk-Aware Safe Set in Belief Space}

\subsection{Particle-Based Safety Specification}
Under the SMC-PHD filter, the multi-object belief is parameterized by a weighted particle set $\{(\obpt^{(i)}, w)\}_{i=1}^L$, where all particles share a uniform weight $w$ after resampling. From this we define the belief state:
\begin{equation*}
\label{eq:belief_state}
    \bb := \begin{bmatrix} \obpt^{(1)} & \dots & \obpt^{(L)} \end{bmatrix}^\top \in \mathcal{B} \subseteq \mathbb{R}^{L \cdot n_o}.
\end{equation*}

To express the safety specification in belief space, we exploit the PPP structure of the multi-object belief via its void probability~\eqref{eq:void_probability}. We first specify the relevant region by introducing the single-object failure set
\begin{equation*}
\label{eq:unsafe_region}
\mathcal{F}(\xb) := \left\{ \ob \in \mathcal{O} \;\middle|\; h_o(\xb, \ob) < 0 \right\},
\end{equation*}
the set of object states that render the robot unsafe. Because the same $h_o$ applies to every object, $\mathcal{F}$ is shared across all $\ob^{(i)} \in \mathbb{O}$.
We denote $N(\mathcal{F}(\xb))$ as the number of objects in the failure set, and the two probabilities are equal:
\begin{equation*}
\Pr\!\big[\,h_o(\xb, \ob^{(i)}) \geq 0,\ \forall \ob^{(i)} \in \mathbb{O}\,\big] = \Pr\!\big[N(\mathcal{F}(\xb))=0\big].
\end{equation*}
Under the PPP belief, applying~\eqref{eq:void_probability} yields the belief space safety specification:
\begin{equation}
\label{eq:problem_count_constraint}
    e^{-\Lambda(\mathcal{F}(\xb), \bb)}
    \ge 1-\tau.
\end{equation}
Since the intensity of the PPP is estimated by the particle set $\{(\obpt^{(i)}, w)\}_{i=1}^L$ as described in Remark~\ref{rem:ppp_intensity}, we have 
\begin{equation}
\label{eq:lambda_indicator}
\Lambda(\mathcal{F}(\xb), \bb) \approx \sum_{i=1}^L w \indicator\{h_o(\xb, \obpt^{(i)}) < 0\}.
\end{equation}
By further simplifying~\eqref{eq:problem_count_constraint}, we obtain the specification
\begin{equation}
\label{eq:problem_weight_fail}
\ln\!\left(\frac{1}{1-\tau}\right) - \Lambda(\mathcal{F}(\xb), \bb) \geq 0.
\end{equation}
Intuitively,~\eqref{eq:problem_weight_fail} requires that the total weight of the particles in the failure set not exceed a threshold determined by the risk level $\tau$. 

\begin{remark}
By~\eqref{eq:phd_inte}, specification~\eqref{eq:problem_weight_fail} is equivalent to $\mathbb{E}[N(\mathcal{F}(\xb))] \le \ln(1/(1-\tau))$, i.e., a bound on the expected number of objects in the failure set. While such a bound could also be stated directly, the chance constraint formulation gives it a risk-aware guarantee and a clear interpretation, with $\tau$ directly encoding the risk level.
\end{remark}

\subsection{Belief Control Barrier Function}
Although the safety specification~\eqref{eq:problem_weight_fail} is straightforward to derive, its left-hand side cannot directly serve as a safety function, as the indicator in~\eqref{eq:lambda_indicator} is discontinuous in $(\xb,\bb)$, so its gradient is undefined. We therefore reformulate~\eqref{eq:problem_weight_fail} into an equivalent form from which we construct a locally Lipschitz BCBF. 
To start, we define $s_i(\xb,\bb) := h_o\!\left(\xb, \obpt^{(i)}\right)$ for $i=1,\dots,L$, so that particle $i$ lies in the safe set when $s_i \ge 0$ and in the failure set when $s_i < 0$. We let $M(\xb,\bb) := \sum_{i=1}^L \indicator\{s_i < 0\}$ count the particles in the failure set, so that specification~\eqref{eq:problem_weight_fail} reads $wM \leq \ln(1/(1-\tau))$. Since $M$ is integer-valued and $w$ is a uniform weight, the specification is equivalent to $M \le k_\tau$, where
\begin{equation*}
\label{eq:k_threshold}
k_\tau :=
\min\!\left\{
L,\;
\left\lfloor
\frac{1}{w}\ln\!\left(\frac{1}{1-\tau}\right)
\right\rfloor
\right\},
\end{equation*}
or, equivalently, to requiring at least $L - k_\tau$ safe particles:
\begin{equation}
\label{eq:count_form}
\sum_{i=1}^L \indicator\{s_i \ge 0\} \ge L - k_\tau.
\end{equation}
We let $\mathcal{S} := \bigl\{ \mathcal{I} \subseteq \{1,\dots,L\} \mid |\mathcal{I}| = L - k_\tau \bigr\}$ denote the collection of all index subsets of cardinality $L - k_\tau$. Condition~\eqref{eq:count_form} holds if and only if at least one subset $\mathcal{I} \in \mathcal{S}$ has all of its particles in the safe set, which we can write as
\begin{equation}
\label{eq:safety_spec_reform}
\max_{\mathcal{I} \in \mathcal{S}} \ \min_{i \in \mathcal{I}} s_i(\xb,\bb) \ge 0.
\end{equation}
We let $s_{(1)} \le \cdots \le s_{(L)}$ denote the order statistics of $\{s_i\}_{i=1}^L$. Since the inner $\min$ in~\eqref{eq:safety_spec_reform} increases with each $s_i$, the outer $\max$ is attained by any subset $\mathcal{I}^\star$ indexing the top-$(L - k_\tau)$ values $s_{(k_\tau+1)},\dots,s_{(L)}$. To remove the nonsmoothness of the inner $\min$, we replace it with the smooth soft minimum~\cite{molnar2023composing} and define our BCBF:
\begin{equation}
\label{eq:belief_cbf}
h_b(\xb,\bb) := -\frac{1}{\kappa}
\ln\!\left(
\sum_{i\in \mathcal{I}^\star} e^{-\kappa s_{i}(\xb,\bb)}
\right),
\end{equation}
where $\kappa > 0$. Here, $h_b$ is an under-approximation of $\min_{i \in \mathcal{I}^\star} s_i$ and recovers it as $\kappa \to \infty$. The corresponding belief space safe set is:
\begin{equation*}
\label{eq:belief_safe_set}
\begin{aligned}
\mathcal{C}_b &:= \bigl\{\, (\xb,\bb) \in \mathcal{X} \times \mathcal{B} \,\big|\; h_b(\xb,\bb) \ge 0 \bigr\}, \\
\partial \mathcal{C}_b &:= \bigl\{\, (\xb,\bb) \in \mathcal{X} \times \mathcal{B} \,\big|\; h_b(\xb,\bb) = 0 \bigr\}.
\end{aligned}
\end{equation*}
In the following lemma, we show that $h_b$ is locally Lipschitz.

\begin{lemma}
\label{lem:hat_h_lipschitz}
$h_b$ is locally Lipschitz in $(\xb,\bb)$.
\end{lemma}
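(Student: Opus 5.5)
The key difficulty is that $\mathcal{I}^\star$ depends on $(\xb,\bb)$: it indexes the top-$(L-k_\tau)$ values of $\{s_i\}$. We first rewrite $h_b$ as a finite max of smooth functions. For each $\mathcal{I}\in\mathcal{S}$ define
\begin{equation*}
g_{\mathcal{I}}(\xb,\bb) := -\frac{1}{\kappa}\ln\!\Bigl(\sum_{i\in\mathcal{I}} e^{-\kappa s_i(\xb,\bb)}\Bigr).
\end{equation*}
We then show that $h_b = \max_{\mathcal{I}\in\mathcal{S}} g_{\mathcal{I}}$. This also shows that $h_b$ is well defined regardless of ties in the order statistics.

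To prove the identity, note that $e^{-\kappa s}$ is strictly decreasing in $s$. Hence the sum $\sum_{i\in\mathcal{I}} e^{-\kappa s_i}$ over subsets of fixed cardinality $L-k_\tau$ is minimized by choosing the indices with the largest $s_i$. Any two subsets attaining the top values $s_{(k_\tau+1)},\dots,s_{(L)}$ (possible when there are ties) give the same multiset of values, and so the same sum. Since $-\frac{1}{\kappa}\ln(\cdot)$ is decreasing, it follows that $g_{\mathcal{I}^\star}\ge g_{\mathcal{I}}$ for every $\mathcal{I}\in\mathcal{S}$. This is a simple exchange argument: swapping an index with a smaller $s_i$ for one with a larger $s_i$ never increases the sum.

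Next, each $g_{\mathcal{I}}$ is locally Lipschitz. Since $h_o$ is smooth, each $s_i(\xb,\bb)=h_o(\xb,\obpt^{(i)})$ is $C^1$ in $(\xb,\bb)$, because it depends only on $\xb$ and the $i$-th block of $\bb$. The map $v\mapsto -\frac{1}{\kappa}\ln\sum_{i} e^{-\kappa v_i}$ is $C^\infty$ on all of $\mathbb{R}^{L-k_\tau}$, since its argument is strictly positive. Its gradient is a softmax vector with entries in $[0,1]$ summing to one, so it is $1$-Lipschitz in the $\ell_\infty$ norm. Composing these gives that $g_{\mathcal{I}}$ is $C^1$, hence locally Lipschitz on any compact neighborhood. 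Concretely, the local Lipschitz constant of $g_{\mathcal{I}}$ is bounded by the maximum local Lipschitz constant of the $s_i$ on that neighborhood.

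Finally, a pointwise maximum of finitely many locally Lipschitz functions is locally Lipschitz. On a common neighborhood, $|\max_{\mathcal{I}} g_{\mathcal{I}}(p)-\max_{\mathcal{I}} g_{\mathcal{I}}(q)|\le\max_{\mathcal{I}}|g_{\mathcal{I}}(p)-g_{\mathcal{I}}(q)|$, and $\mathcal{S}$ is finite. This concludes the argument.

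The main obstacle is the max representation. Without it, $h_b$ appears to switch discontinuously between smooth branches as the ranking of the $s_i$ changes. The max representation shows that at every switching point the competing branches take equal values, so $h_b$ is continuous there and the local Lipschitz property follows. A side benefit is that this form makes the Clarke generalized gradient of $h_b$ explicit, as the convex hull of $\nabla g_{\mathcal{I}}$ over the active subsets. This will be useful for the nonsmooth CBF conditions.
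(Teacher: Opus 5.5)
Your proof is correct, but it takes a different route from the paper's. The paper argues purely by composition. Each $s_i$ is locally Lipschitz, and the sorting map $(s_1,\dots,s_L)\mapsto(s_{(1)},\dots,s_{(L)})$ is Lipschitz (a cited fact). Keeping the top $L-k_\tau$ coordinates is a linear projection, and the soft minimum is smooth. This tacitly uses that the soft minimum is symmetric in its arguments, so $h_b$ equals the soft minimum of $(s_{(k_\tau+1)},\dots,s_{(L)})$ whichever $\mathcal{I}^\star$ is chosen under ties. You instead establish $h_b=\max_{\mathcal{I}\in\mathcal{S}}g_{\mathcal{I}}$ by an exchange argument, then use that a finite maximum of locally Lipschitz functions is locally Lipschitz. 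The paper's version is shorter and needs no combinatorics, but it delegates the key step to a citation and leaves well-definedness under ties implicit. Yours is self-contained and makes well-definedness explicit. More usefully, it exhibits $h_b$ as a maximum of $C^1$ functions whose maximizers at any point are exactly the selections in the active set $\mathcal{A}$. That is precisely what justifies the formula $\partial h_b=\operatorname{co}\{\nabla h_b^{(\mathcal{I}^\star)}\mid\mathcal{I}^\star\in\mathcal{A}\}$ invoked in the proof of Theorem~\ref{thm:hat_h_ncbf}, by Clarke's max rule with equality since $C^1$ functions are regular; the paper's sorting argument alone does not provide this. The Lipschitz part of your argument needs only local Lipschitzness of $h_o$, as in the paper; smoothness matters only for the gradient formula. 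Both arguments tacitly assume $k_\tau<L$, so that $\mathcal{I}^\star\neq\emptyset$ and $h_b$ is finite.
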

\begin{proof}
Since $h_o$ is locally Lipschitz, each $s_i$ is locally Lipschitz in $(\xb,\bb)$. The sorting operation $(s_1,\dots,s_L)\mapsto(s_{(1)},\dots,s_{(L)})$ is locally Lipschitz ~\cite{anil2019sorting}, so selecting its top-$(L-k_\tau)$ values $(s_{(k_\tau+1)},\dots,s_{(L)})$ is also locally Lipschitz. As the soft minimum is smooth, $h_b$ is locally Lipschitz as a composition of locally Lipschitz functions.
\end{proof}

To enforce forward invariance of $\mathcal{C}_b$, there remain two challenges. First, although $h_b$ is locally Lipschitz, it could be nonsmooth when particles tie at the threshold value $s_{(k_\tau+1)}$. We therefore adopt the nonsmooth CBF framework~\cite{glotfelter2017nonsmooth} to synthesize a safe controller. Second, the belief evolves through continuous prediction and discrete updates, which we analyze separately in the next section.

\section{Safe Controller Synthesis}

\subsection{Belief Dynamics}
Between filter updates, each particle propagates according to the object dynamics:
\begin{equation*}
\dot{\bb} =
\begin{bmatrix}
    \bm{\xi}(\obpt^{(1)})^\top & \cdots & \bm{\xi}(\obpt^{(L)})^\top
\end{bmatrix}^\top
:= \bm{\Xi}(\bb).
\end{equation*}
The particle weight $w$ remains constant during the continuous prediction since no new information is incorporated. We denote the joint dynamics of the robot and the objects in the belief space as
\begin{equation}
\label{eq:belief_dynamics}
    \bm{F}(\xb,\bb,\ub) :=
    \begin{bmatrix} \fb(\xb)+\gb(\xb)\ub \\ \bm{\Xi}(\bb) \end{bmatrix}.
\end{equation}

At discrete update times $\{t_k\}_{k \in \mathbb{N}}$, the SMC-PHD filter incorporates new observations $\mathbb{Z}_k$, inducing a discrete update:
\begin{equation}
\label{eq:belief_update}
\bb(t_k^+) = \Deltab_k\!\left(\bb(t_k^-), \mathbb{Z}_k\right),
\end{equation}
where $\Deltab_k$ can be discontinuous, as resampling can duplicate or remove particles, and birth introduces new particles depending on the measurements. The weight $w$ is also updated at $t_k$ but is omitted here for brevity. The overall belief dynamics form a hybrid system as follows:
\begin{equation*}
\label{eq:belief_hybrid}
\begin{cases}
\dot{\bb} = \bm{\Xi}(\bb), & t \in [t_{k-1}, t_{k}), \\
\bb^+ = \Deltab_k\!\left(\bb^-, \mathbb{Z}_k\right), & t = t_k.
\end{cases}
\end{equation*}
Here, $\bb^-=\bb(t_k^-)$ and $\bb^+=\bb(t_k^+)$ denote the belief states immediately before and after the update at $t_k$, respectively.

\begin{figure}[b]
\centering
\includegraphics[width=0.58\columnwidth]{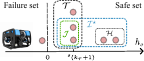}
\captionsetup{font=footnotesize}
\caption{An illustration of the tied scenario. The three tied particles (red circles) form $\mathcal{T}$, but only two of them enter the selection $\mathcal{I}^\star = \mathcal{H} \cup \mathcal{J}$. One possible choice of $\mathcal{J}$ is shown.}
\label{fig:illustration_tie_particles}
\end{figure}

\subsection{Safety under Continuous Prediction}
During the prediction phase, the belief evolves continuously according to~\eqref{eq:belief_dynamics}. When particles tie at the threshold value $s_{(k_\tau+1)}$, the selection $\mathcal{I}^\star$ may become non-unique, and where it does, the time derivative of $h_b$ differs across these selections, which makes $h_b$ nonsmooth. We therefore partition the indices into those strictly above the threshold, $\mathcal{H}:= \{ \eta \mid s_\eta > s_{(k_\tau+1)} \}$, and those tied at it, $\mathcal{T}:= \{ \mu \mid s_\mu = s_{(k_\tau+1)} \}$. Every $\mathcal{I}^\star$ retains all of $\mathcal{H}$ and completes it with a subset $\mathcal{J} \subseteq \mathcal{T}$ of cardinality $|\mathcal{J}| = L - k_\tau - |\mathcal{H}|$, i.e., $\mathcal{I}^\star = \mathcal{H} \cup \mathcal{J}$. An illustration is shown in Fig.~\ref{fig:illustration_tie_particles}. We collect all such selections in the \emph{active set}
\begin{equation*}
\mathcal{A} := \{\, \mathcal{H} \cup \mathcal{J} \mid \mathcal{J} \subseteq \mathcal{T},\ |\mathcal{J}| = L - k_\tau - |\mathcal{H}| \,\}.
\end{equation*}
To render $\mathcal{C}_b$ forward invariant, the following CBF condition should hold for every active realization:
\begin{equation}
\label{eq:cbf_condition}
\nabla h_b^{(\mathcal{I}^\star)\top}
\bm{F}(\xb, \bb, \ub)
\;\geq\; -\gamma h_b, \quad \forall\, \mathcal{I}^\star \in \mathcal{A},
\end{equation}
where $\gamma$ is a positive constant. Enforcing~\eqref{eq:cbf_condition} directly requires one constraint for each realization $\mathcal{I}^\star$, but the number of such realizations grows exponentially with the number of tied particles $|\mathcal{T}|$. Therefore, we exploit the structure of the tie to obtain a more conservative but tractable set of conditions whose size scales linearly in $|\mathcal{T}|$. We begin by expanding the gradient of each realization as
\begin{equation}
\label{eq:gradient_expansion}
\nabla h_b^{(\mathcal{I}^\star)} = \sum_{i \in \mathcal{I}^\star} c_i \nabla s_i = \sum_{\eta \in \mathcal{H}} c_\eta \nabla s_\eta + \sum_{j \in \mathcal{J}} c_j \nabla s_j.
\end{equation}
where the coefficient $c_i := e^{-\kappa s_i} / \sum_{\ell \in \mathcal{I}^\star} e^{-\kappa s_\ell}$.
Our key observation is that the tied particles share a common coefficient. Since $s_\mu = s_{(k_\tau+1)}$ for every $\mu \in \mathcal{T}$, each $c_\mu$ equals the same value $\bar{c}$. Consequently, $\sum_{j \in \mathcal{J}} c_j \nabla s_j = \bar{c} \sum_{j \in \mathcal{J}} \nabla s_j$ in~\eqref{eq:gradient_expansion}. This motivates us to introduce a single auxiliary variable $\rho \in \mathbb{R}$ that lower-bounds each tied gradient term, replacing~\eqref{eq:cbf_condition} with the following sufficient conditions:
\begin{subequations}\label{eq:final_cbf_condition}
\begin{empheq}[left=\empheqlbrace\,]{align}
&\sum_{\eta \in \mathcal{H}} c_\eta
\nabla s_\eta^\top \bm{F}(\xb,\bb,\ub)
+ \bar{c} |\mathcal{J}| \rho
\ge -\gamma h_b, \label{eq:conditions-j} \\
&\nabla s_\mu^\top \bm{F}(\xb,\bb,\ub) \ge \rho,
\quad \forall \mu \in \mathcal{T}. \label{eq:conditions-target}
\end{empheq}
\end{subequations}
We note that~\eqref{eq:final_cbf_condition} contains $|\mathcal T|$ constraints in~\eqref{eq:conditions-target} plus one constraint~\eqref{eq:conditions-j}.
The following lemma establishes that~\eqref{eq:final_cbf_condition} is sufficient for~\eqref{eq:cbf_condition}.

\begin{lemma}
\label{lem:final_implies_cbf}
If there exist a control input $\ub\in\mathcal U$ and an auxiliary variable $\rho\in\mathbb{R}$ such that~\eqref{eq:final_cbf_condition} holds, then~\eqref{eq:cbf_condition} holds for every $\mathcal{I}^\star \in \mathcal{A}$.
\end{lemma}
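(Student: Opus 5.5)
Fix an arbitrary realization $\mathcal{I}^\star = \mathcal{H}\cup\mathcal{J}\in\mathcal{A}$ and take the pair $(\ub,\rho)$ for which~\eqref{eq:final_cbf_condition} holds. The goal is to lower-bound $\nabla h_b^{(\mathcal{I}^\star)\top}\bm{F}(\xb,\bb,\ub)$ by the left-hand side of~\eqref{eq:conditions-j}. Then~\eqref{eq:conditions-j} yields~\eqref{eq:cbf_condition} for this $\mathcal{I}^\star$. Since $\mathcal{I}^\star$ is arbitrary, the claim follows for all of $\mathcal{A}$.

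First, I would use the expansion~\eqref{eq:gradient_expansion} to write
\begin{equation*}
\nabla h_b^{(\mathcal{I}^\star)\top}\bm{F} = \sum_{\eta\in\mathcal{H}} c_\eta \nabla s_\eta^\top \bm{F} + \sum_{j\in\mathcal{J}} c_j \nabla s_j^\top \bm{F}.
\end{equation*}
Two facts need checking here. The coefficients $c_i$ are computed with respect to $\mathcal{I}^\star$, but the values $s_i$ for $i\in\mathcal{I}^\star$ form the same multiset $\{s_{(k_\tau+1)},\dots,s_{(L)}\}$ for every choice of $\mathcal{J}$. Hence the normalizer $\sum_{\ell\in\mathcal{I}^\star} e^{-\kappa s_\ell}$, the coefficients $c_\eta$ for $\eta\in\mathcal{H}$, the common tied coefficient $\bar c$, and the value $h_b$ are all independent of which $\mathcal{J}$ is selected. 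This is what makes the single condition~\eqref{eq:conditions-j} meaningful uniformly over $\mathcal{A}$. Moreover, all $j\in\mathcal{J}\subseteq\mathcal{T}$ satisfy $s_j = s_{(k_\tau+1)}$, so $c_j=\bar c$. The second sum therefore equals $\bar c\sum_{j\in\mathcal{J}} \nabla s_j^\top\bm{F}$.

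Next, since $\mathcal{J}\subseteq\mathcal{T}$, condition~\eqref{eq:conditions-target} gives $\nabla s_j^\top \bm{F}\ge\rho$ for each $j\in\mathcal{J}$. Because $\bar c>0$ (it is a ratio of exponentials), summing these inequalities gives $\bar c\sum_{j\in\mathcal{J}}\nabla s_j^\top\bm{F}\ge \bar c|\mathcal{J}|\rho$. Note that $|\mathcal{J}| = L-k_\tau-|\mathcal{H}|$ is the same for all realizations. Substituting this into the expansion gives
\begin{equation*}
\nabla h_b^{(\mathcal{I}^\star)\top}\bm{F} \ge \sum_{\eta\in\mathcal{H}} c_\eta\nabla s_\eta^\top\bm{F} + \bar c|\mathcal{J}|\rho \ge -\gamma h_b,
\end{equation*}
where the last step is~\eqref{eq:conditions-j}.

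The only delicate point is the invariance argument above: $c_\eta$, $\bar c$, and $h_b$ must not depend on the choice of $\mathcal{J}$. I would make it explicit by observing that the tied values are all equal. Swapping one element of $\mathcal{T}$ for another therefore leaves $\sum_{\ell\in\mathcal{I}^\star}e^{-\kappa s_\ell}$ unchanged. Everything else is direct substitution, using positivity of $\bar c$ to preserve the inequality direction.
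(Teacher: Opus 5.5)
Your proof is correct and follows essentially the same route as the paper: expand $\nabla h_b^{(\mathcal{I}^\star)\top}\bm{F}$ via \eqref{eq:gradient_expansion}, replace the tied coefficients by the common $\bar c>0$, lower-bound the tied sum by $\bar c|\mathcal{J}|\rho$ using \eqref{eq:conditions-target}, and conclude with \eqref{eq:conditions-j}. You also check explicitly that the normalizer, the coefficients $c_\eta$, $\bar c$, and $h_b$ are the same for every choice of $\mathcal{J}\subseteq\mathcal{T}$; the paper leaves this implicit, and it is a worthwhile clarification because it is what lets the single constraint \eqref{eq:conditions-j} cover all of $\mathcal{A}$.
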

\begin{proof}
Since $\bar{c}>0$ and~\eqref{eq:conditions-target} gives $\nabla s_\mu^\top \bm{F}\ge\rho$ for all $\mu\in\mathcal{T}$, for any $\mathcal{J}\subseteq\mathcal{T}$ we have
\begin{equation*}
\sum_{j \in \mathcal{J}} c_j \nabla s_j^\top \bm{F}
= \bar{c}\sum_{j \in \mathcal{J}} \nabla s_j^\top \bm{F}
\ge |\mathcal{J}| \bar{c} \rho.
\end{equation*}
Combining this with~\eqref{eq:conditions-j} yields
\begin{equation*}
\sum_{i \in \mathcal{H}} c_i \nabla s_i^\top \bm{F}
+ \sum_{j \in \mathcal{J}} c_j \nabla s_j^\top \bm{F}
\ge -\gamma h_b, \quad \forall \mathcal{J} \subseteq \mathcal{T}.
\end{equation*}
By~\eqref{eq:gradient_expansion}, the left-hand side equals $\nabla h_b^{(\mathcal{I}^\star)\top} \bm{F}$. Since every $\mathcal{I}^\star \in\mathcal{A}$ is of this form,~\eqref{eq:cbf_condition} holds.
\end{proof}

Building on this, the next theorem establishes that satisfying conditions in~\eqref{eq:final_cbf_condition} renders $\mathcal{C}_b$ forward invariant during continuous prediction.

\begin{theorem}
\label{thm:hat_h_ncbf}
Suppose there exist a control input $\ub\in\mathcal U$ and a variable $\rho \in \mathbb{R}$ such that~\eqref{eq:final_cbf_condition} holds. Any controller satisfying~\eqref{eq:final_cbf_condition} renders $\mathcal{C}_b$ forward invariant under the continuous prediction~\eqref{eq:belief_dynamics}.
\end{theorem}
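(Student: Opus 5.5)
The plan is to reduce the statement to the nonsmooth barrier certificate result of \cite{glotfelter2017nonsmooth}. We apply it with $h_b$ on the state $(\xb,\bb)$ and the closed-loop vector field $\bm{F}(\xb,\bb,\ub(\xb,\bb))$ from~\eqref{eq:belief_dynamics}. By Lemma~\ref{lem:hat_h_lipschitz}, $h_b$ is locally Lipschitz, so its Clarke generalized gradient $\partial h_b$ is well defined. Glotfelter et al.\ show that if, along Filippov solutions, $\min_{\zeta\in\partial h_b}\zeta^\top \bm{F} \ge -\gamma h_b$ holds almost everywhere, then $\mathcal{C}_b$ is forward invariant. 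Their comparison argument uses the fact that $t\mapsto h_b(\xb(t),\bb(t))$ is absolutely continuous with derivative in $\{\zeta^\top \bm{F}:\zeta\in\partial h_b\}$ almost everywhere. So it suffices to show that~\eqref{eq:final_cbf_condition} gives this Clarke-gradient inequality.

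\textbf{Step 1: characterize $\partial h_b$.} Locally, $h_b$ is a selection function. Near a point $(\xb,\bb)$, every index in $\mathcal{H}$ stays strictly above the threshold and every index outside $\mathcal{H}\cup\mathcal{T}$ stays strictly below it. Hence on a neighborhood, $h_b=h_b^{(\mathcal{I}^\star)}$ for some $\mathcal{I}^\star\in\mathcal{A}$, possibly different at different nearby points. Each $h_b^{(\mathcal{I}^\star)}$ is $C^1$ wherever $h_o$ is smooth. By the standard result for continuous selections of $C^1$ functions (piecewise-smooth functions), $\partial h_b(\xb,\bb)\subseteq \mathrm{co}\{\nabla h_b^{(\mathcal{I}^\star)}(\xb,\bb):\mathcal{I}^\star\in\mathcal{A}\}$.

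\textbf{Step 2: pass the bound to the convex hull.} By Lemma~\ref{lem:final_implies_cbf}, the inequality~\eqref{eq:cbf_condition} holds for every $\mathcal{I}^\star\in\mathcal{A}$. The map $\zeta\mapsto\zeta^\top\bm{F}$ is linear. So for any $\zeta=\sum_a\lambda_a\nabla h_b^{(a)}$ with $\lambda_a\ge0$ and $\sum_a\lambda_a=1$, we get $\zeta^\top\bm{F}\ge-\gamma h_b$. Therefore $\inf_{\zeta\in\partial h_b}\zeta^\top\bm{F}\ge-\gamma h_b$. This is the nonsmooth CBF condition with the linear extended class-$\mathcal{K}_\infty$ function $\alpha(r)=\gamma r$.

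\textbf{Step 3: conclude invariance.} We invoke the invariance theorem of \cite{glotfelter2017nonsmooth}. Along any Carathéodory/Filippov trajectory on $[t_{k-1},t_k)$, we have $\dot h_b\ge-\gamma h_b$ almost everywhere. The comparison lemma then gives $h_b(t)\ge e^{-\gamma(t-t_{k-1})}h_b(t_{k-1})\ge0$. Since $w$ is constant during prediction, $k_\tau$ and hence the definition of $h_b$ do not change on the interval.

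\textbf{Main obstacle.} Step 1 is the delicate part. The condition must be checked against all of $\partial h_b$, not just the realizations active at a point, and the index sets can switch along trajectories. The first thing to try is to verify that every index set selected in a neighborhood is indeed in $\mathcal{A}$ at the base point. This follows from continuity of the $s_i$, which makes strict inequalities persist. One then applies the selection-function gradient formula.

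A secondary subtlety is regularity. The condition is imposed pointwise in the QP, but the closed-loop controller may be discontinuous. We would assume, as in \cite{glotfelter2017nonsmooth}, that the condition holds on a neighborhood, or that the Filippov regularization preserves it. Convexity of the constraint set in $\ub$ supports the latter.
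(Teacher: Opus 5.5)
Your proposal is correct and follows the paper's route. Both arguments use local Lipschitzness from Lemma~\ref{lem:hat_h_lipschitz}, express the Clarke gradient of $h_b$ through the convex hull of $\nabla h_b^{(\mathcal{I}^\star)}$ over $\mathcal{I}^\star\in\mathcal{A}$, apply Lemma~\ref{lem:final_implies_cbf} to every active selection, and then invoke the nonsmooth barrier theorem of \cite{glotfelter2017nonsmooth}; your explicit convex-combination and comparison steps, the remark that the inclusion suffices, and the caveat about regularity of the QP feedback fill in details the paper leaves implicit rather than changing the argument.
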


\begin{proof}
By Lemma~\ref{lem:hat_h_lipschitz}, $h_b$ is locally Lipschitz. Away from tie points, $\mathcal{A}$ contains a single subset and $h_b$ is smooth, therefore $\partial h_b = \{\nabla h_b\}$. When tie points occur, by~\cite[Theorem~1]{glotfelter2017nonsmooth}, the generalized gradient of $h_b$ satisfies
\begin{equation*}
\label{eq:gen_gradient}
\partial h_b(\xb,\bb) = \operatorname{co}\!\bigl\{ \nabla h_b^{(\mathcal{I}^\star)}(\xb,\bb) \mid \mathcal{I}^\star \in \mathcal{A} \bigr\},
\end{equation*}
where $\operatorname{co}$ denotes the convex hull. Given that~\eqref{eq:final_cbf_condition} holds, by Lemma~\ref{lem:final_implies_cbf},~\eqref{eq:cbf_condition} holds. By~\cite[Theorem~3]{glotfelter2017nonsmooth}, $h_b$ is a valid nonsmooth CBF for~\eqref{eq:belief_dynamics}. Thus, $\mathcal{C}_b$ is forward invariant.
\end{proof}

\subsection{Safety under Discrete Updates}
At time steps when sensor measurements become available, the belief state is updated according to~\eqref{eq:belief_update}. This update may introduce particles into the failure set $\mathcal{F}$. The resulting belief change cannot in general be fully compensated for by the control input at the update time. For example, under limited FOV or occlusion, a previously unobserved object may enter the FOV and trigger particle birth inside $\mathcal{F}$. Nevertheless, we mitigate this issue by constructing~\eqref{eq:belief_cbf} with a tightened risk level $\tau-\epsilon$, where $0<\epsilon<\tau$. We write $\mathcal{C}_b^{\tau-\epsilon}$ and $\mathcal{C}_b^{\tau}$ for the safe sets constructed with the tightened and original risk levels, respectively. The resulting safety margin permits a bounded increase in the total weight of particles in $\mathcal{F}$, as quantified in the following proposition.

\begin{proposition}
\label{prop:jump_robustness}
Suppose that before the update, $(\xb,\bb^-)$ is in $\mathcal{C}_b^{\tau-\epsilon}$. If the update satisfies
\begin{equation}
\label{eq:discrete_update_condition}
\Lambda(\mathcal{F}, \bb^+) - \Lambda(\mathcal{F}, \bb^-) \le \ln(1+\frac{\epsilon}{1-\tau}),
\end{equation}
and $\kappa \to\infty$, then $(\xb,\bb^+)$ stays in $\mathcal{C}_b^{\tau}$.
\end{proposition}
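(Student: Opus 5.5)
The plan is to reduce membership in $\mathcal{C}_b^{\tau-\epsilon}$ and $\mathcal{C}_b^{\tau}$ to the underlying weight specification~\eqref{eq:problem_weight_fail}. Then the jump condition~\eqref{eq:discrete_update_condition} becomes a simple additive bound on $\Lambda$, and the result follows by adding logarithms.

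\textbf{Step 1: the BCBF recovers the specification as $\kappa\to\infty$.} As $\kappa\to\infty$, the soft minimum~\eqref{eq:belief_cbf} converges to $\min_{i\in\mathcal{I}^\star} s_i = s_{(k_\tau+1)}$. So $h_b\ge 0$ becomes $s_{(k_\tau+1)}\ge 0$. By the chain of equivalences~\eqref{eq:problem_weight_fail}$\Leftrightarrow$\eqref{eq:count_form}$\Leftrightarrow$\eqref{eq:safety_spec_reform}, this holds if and only if $\Lambda(\mathcal{F}(\xb),\bb)\le \ln(1/(1-\tau))$. The two sets are built from risk levels $\tau-\epsilon$ and $\tau$ with their own $k_\tau$. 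In this limit they therefore read
\begin{equation*}
(\xb,\bb)\in\mathcal{C}_b^{\tau-\epsilon}\iff \Lambda(\mathcal{F},\bb)\le \ln\!\Big(\tfrac{1}{1-\tau+\epsilon}\Big),
\qquad
(\xb,\bb)\in\mathcal{C}_b^{\tau}\iff \Lambda(\mathcal{F},\bb)\le \ln\!\Big(\tfrac{1}{1-\tau}\Big).
\end{equation*}

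\textbf{Step 2: combine with the jump bound.} From $(\xb,\bb^-)\in\mathcal{C}_b^{\tau-\epsilon}$ and~\eqref{eq:discrete_update_condition} we get
\begin{equation*}
\Lambda(\mathcal{F},\bb^+)\le -\ln(1-\tau+\epsilon)+\ln\!\Big(\tfrac{1-\tau+\epsilon}{1-\tau}\Big)=\ln\!\Big(\tfrac{1}{1-\tau}\Big).
\end{equation*}
By Step~1 this is exactly membership in $\mathcal{C}_b^{\tau}$. The robot state $\xb$ does not jump, so $\mathcal{F}(\xb)$ is the same region on both sides of the update.

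\textbf{Main obstacle: making Step~1 rigorous.} Two points need care. First, $w$ changes at the update, so $k_\tau$ differs before and after. I would handle this by always working with $\Lambda=wM$ evaluated at the current weight, never with $k_\tau$ directly; the equivalence $wM\le c\iff M\le\lfloor c/w\rfloor$ holds for any $w$. Second, the soft minimum under-approximates the hard minimum, so $h_b\ge0$ implies $s_{(k_\tau+1)}\ge0$ for every $\kappa$. The converse holds only in the limit, which is why the statement assumes $\kappa\to\infty$. There is a boundary case where $s_{(k_\tau+1)}=0$ exactly and $h_b$ stays negative for finite $\kappa$. I would interpret the limit statement as the hard-minimum set $\{s_{(k_\tau+1)}\ge 0\}$, which is the $\kappa\to\infty$ limit of $\mathcal{C}_b$. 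In that interpretation both equivalences in Step~1 are exact.
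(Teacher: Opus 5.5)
Your proposal is correct and follows the paper's argument. The paper likewise obtains the weight specification at level $\tau-\epsilon$ from $h_b\ge 0$, adds the jump bound using $\ln\bigl(\tfrac{1}{1-\tau+\epsilon}\bigr)+\ln\bigl(1+\tfrac{\epsilon}{1-\tau}\bigr)=\ln\bigl(\tfrac{1}{1-\tau}\bigr)$, and then invokes $\kappa\to\infty$ to return from the hard-minimum condition to $h_b\ge 0$. Your remarks on the weight $w$ (and hence $k_\tau$) changing at the update, and on the boundary case $s_{(k_\tau+1)}=0$, make explicit two points that the paper's terse proof leaves implicit.
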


\begin{proof}
If the update satisfies~\eqref{eq:discrete_update_condition}, then~\eqref{eq:problem_weight_fail} holds with risk level $\tau$, indicating that~\eqref{eq:safety_spec_reform} is satisfied. Therefore, when $\kappa$ approaches $\infty$,~\eqref{eq:belief_cbf} remains non-negative, which means the state $(\xb, \bb)$ remains within $\mathcal{C}_b^{\tau}$ under the update.
\end{proof}

\subsection{Controller Synthesis via BCBF-QP}
The final risk-aware safe controller is synthesized via the following QP:
\begin{equation*}
\begin{aligned}
\ub^* = &\arg\min_{\ub \in \mathcal{U}, \rho} (\ub - \ub_\mathrm{ref})^\top \bm{Q} (\ub - \ub_\mathrm{ref}) \\
 \text{s.t.} \;\; & \eqref{eq:final_cbf_condition}\\
\end{aligned}
\end{equation*}
Here, $\bm{Q} \succ 0$ is the symmetric weighting matrix. 
We note that the controller synthesis reduces to a QP with $n_u+1$ decision variables and $|\mathcal{T}|+1$ constraints, which remains tractable even when the particle-based belief state is high-dimensional.
The CBF condition~\eqref{eq:final_cbf_condition} is enforced using the tightened risk level $\tau-\epsilon$. The following remark discusses how the state is maintained in $\mathcal{C}_b$ under the belief dynamics.

\begin{remark}
Theorem~\ref{thm:hat_h_ncbf} ensures that a state starting in $\mathcal{C}_b^{\tau-\epsilon}$ remains in this set during continuous prediction. If the subsequent update satisfies~\eqref{eq:discrete_update_condition}, Proposition~\ref{prop:jump_robustness} ensures that the updated state lies in $\mathcal{C}_b^{\tau}$ as $\kappa\to\infty$. Keeping the state in $\mathcal{C}_b^{\tau}$ over successive prediction intervals and updates additionally requires it to return to $\mathcal{C}_b^{\tau-\epsilon}$ before the next update. This recovery property is not explicitly enforced by the current controller. While it might be addressed by a time-varying CBF formulation~\cite{lindemann2018control}, we leave it for future work.
\end{remark}

\section{Experiments and Applications}
We validate our approach in two example applications
\footnote{
Code will be released upon acceptance.
}, where the robot needs to estimate the multi-object state to satisfy the safety specification.

\subsection{Implementation Details}
For runtime efficiency, we exploit a structural property of~\eqref{eq:final_cbf_condition}: computing each $s_i$ and evaluating $c_i \nabla s_i^\top \bm{F}$ are independent across particles and therefore parallelizable. We implement this in JAX~\cite{bradbury2018jax} using vectorization and automatic differentiation. A further difficulty arises when evaluating $c_i$ in~\eqref{eq:gradient_expansion}, since $e^{-\kappa s_i}$ can underflow for large $\kappa$. To avoid this, we rescale the numerator and denominator by a common factor, which preserves $c_i$ while ensuring numerical stability. 

We solve the QPs with OSQP~\cite{stellato2018osqp}. To guarantee that a control input is always available, we relax the CBF constraint with a heavily penalized slack variable. 
In simulation, we enable the relaxation only as a fallback when the strict QP is infeasible. On hardware, we instead keep it always active to ensure real-time performance. Throughout, we use the same $\gamma$ for our method and the baselines, since a smaller $\gamma$ induces more conservative behavior across all CBFs.

All computations are performed on a laptop with an Intel Core i7-13700H CPU, 32 GB of RAM, and an NVIDIA RTX 4070 GPU with 8 GB of memory.

\subsection{Multi-Object FOV Maintenance}
\subsubsection{Setup}
We consider a planar robot modeled as a unicycle with state $\xb = [p_x, p_y, \theta]^\top \in \mathbb{R}^2 \times \mathrm{S}^1$ and dynamics
\begin{equation*}
\dot{p}_x = u_v\cos\theta, \quad \dot{p}_y = u_v\sin\theta, \quad \dot{\theta} = u_\omega,
\end{equation*}
where $[p_x,p_y]^\top$ and $\theta$ denote the position and heading of the robot, respectively, and $u_v$ and $u_\omega$ are the linear and angular velocity inputs. The robot is equipped with a forward-facing sensor with a limited FOV that returns a set of range-and-bearing measurements. The robot should keep the moving objects within its FOV, whereas a reference controller attempts to stabilize the robot at its initial pose $[0,0,\pi/2]^\top$. An unknown number of objects are initialized in the FOV, sharing the dynamics and measurement model:
\begin{equation*}
\begin{aligned}
&\dot{\bm{q}} = \bm{v}, \quad \dot{\bm{v}} = \bm{0}, \\
&\zb = [\sqrt{{}^p q_x^2 + {}^p q_y^2}, \ \mathrm{arctan2}({}^p q_y, {}^p q_x)]^\top + \bm{\nu},
\end{aligned}
\end{equation*}
where $\bm{q}=[q_x, q_y]^\top$ and $\bm{v}=[v_x, v_y]^\top$ are the object position and velocity, and $[{}^p q_x, {}^p q_y]^\top$ denotes the object position in the robot's local frame. $\boldsymbol{\nu} \sim \mathcal{N}(\bm{0}, \mathbf{R})$ is the measurement noise with $\mathbf{R} = \mathrm{diag}(\sigma_r^2, \sigma_b^2)$, where $\sigma_r = \SI{1}{\meter}$ and $\sigma_b = \SI{1}{\degree}$. The sensor measurement rate is $\SI{10}{\hertz}$. Since the velocity is not directly observed and the measurement model is nonlinear, we use the SMC-PHD filter \cite{ristic2010improved} to estimate the full multi-object state. We resample to a fixed $L=3000$ particles at each discrete update step. 

We model the FOV as a bounded angular sector with aperture $\beta=\SI{50}{\degree}$, as illustrated in Fig.~\ref{fig:exp_fov_A}. Following~\cite{catellani2023distributed}, we define a pair of single-object safety functions in the state space, one for each edge of the FOV sector:
\begin{equation*}
h^{\mathrm{R/L}}_o(\xb,\ob) := \tan(\beta/2)\cdot{}^p q_x \pm {}^p q_y.
\end{equation*}
We set the true number of objects to $N_o=4$. The objects' initial positions are randomly sampled along the central axis of the angular FOV sector, and their velocities are constant, with magnitudes uniform in $[0.5, 1]$, and directions uniform in $[-\pi/3, -\pi/6]$ relative to the positive $x$-axis. 

\subsubsection{Baselines}
Given the particles from the SMC-PHD filter, a common way to extract object state estimates is to first cluster the particles and then take the mean of each cluster~\cite{vo2005sequential,ristic2010improved}. We then evaluate the state space safety function $h^{\mathrm{R/L}}_o$ for each object state estimate and aggregate them into a single CBF~\cite{molnar2023composing}, which we refer to as Mean-CBF. Alternatively, one can take the highest-weight particle in each cluster as a maximum a posteriori (MAP) estimate. The CBF synthesized from these estimates is referred to as MAP-CBF.

\begin{figure}[t]
\centering
\includegraphics[width=0.8\columnwidth]{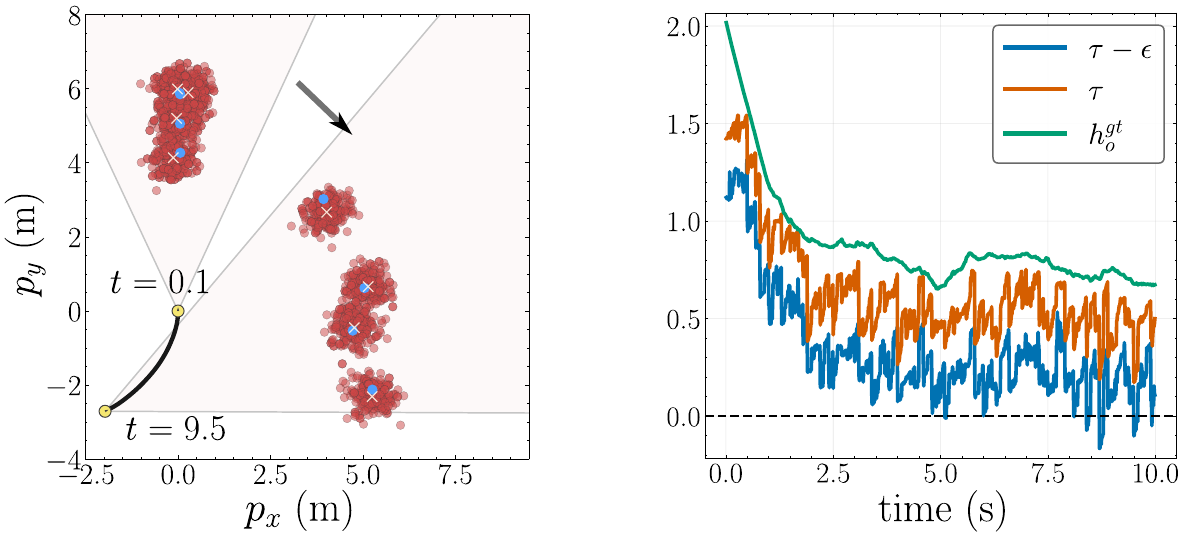}
\captionsetup{font=footnotesize}
\caption{ Multi-object FOV maintenance.
\textbf{Left}: Snapshots at $t=0.1\,\mathrm{s}$ and $9.5\,\mathrm{s}$, showing the robot (yellow dot) with its trajectory (black) and its FOV (pink sector), the true objects (blue dots), the particles (red dots), and the detections (white crosses). A black arrow indicates the objects' moving direction. \textbf{Right}: Evolution of $h^{\mathrm{gt}}_o$ and $\min \{ h^{\mathrm{L}}_b, h^{\mathrm{R}}_b \}$ with different risk levels over the simulation.
}
\label{fig:exp_fov_A}
\end{figure}

\begin{table}[t]
\centering
\captionsetup{font=footnotesize}
\caption{FOV maintenance results (mean $\pm$ std across random seeds)}
\label{tab:fov_performance_comparison}
\scriptsize
\renewcommand{\arraystretch}{0.8}
\begin{tabular}{@{}lcccc@{}}
\toprule
& \makecell{Ours \\ {\tiny($\tau{-}\epsilon{=}0.01$)}} & \makecell{Ours \\ {\tiny($\tau{-}\epsilon{=}0.2$)}} & Mean-CBF & MAP-CBF \\ \midrule
$\min\limits_t h^{\mathrm{gt}}_o(t)$
& $\mathbf{0.51}_{\pm 0.12}$
& $0.24_{\pm 0.12}$
& $-0.16_{\pm 0.54}$
& $0.05_{\pm 0.45}$ \\ \midrule
\# Unsafe & $\mathbf{0}$ & $5$ & $49$ & $31$ \\ \midrule
Avg.\ $t_{c}$ (ms)
& $1.72_{\pm 0.06}$
& $1.78_{\pm 0.06}$
& $0.60_{\pm 0.02}$
& $0.61_{\pm 0.02}$ \\ \midrule
Max.\ $t_{c}$ (ms)
& $6.84$
& $7.46$
& $3.90$
& $4.14$ \\ \bottomrule
\end{tabular}
\vspace{-5mm}
\end{table}

\subsubsection{Results}
We evaluate our method against the baselines over 100 simulations. The statistical results are reported in Table~\ref{tab:fov_performance_comparison}. 
At each time $t$, we let
\begin{equation*}
h^{\mathrm{gt}}_o(t) \;:=\; \min_{i \in \{1,\dots,N_o\}} \; \min\!\big\{\, h_o^{\mathrm{L}}(\xb, \ob^{(i)}),\; h_o^{\mathrm{R}}(\xb, \ob^{(i)}) \,\big\}
\end{equation*}
denote the smallest ground-truth value of the state space safety function over both FOV edges and all $N_o$ objects. If $h^{\mathrm{gt}}_o$ becomes negative, the objects are not within the FOV. As shown in the table, our BCBF keeps $h^{\mathrm{gt}}_o$ nonnegative when the risk level is sufficiently low, whereas both Mean-CBF and MAP-CBF can be unsafe. This is because they reduce the particle-based belief to a single estimate per object, discarding the spatial distribution of the particles. This leads to safety violations when the PHD belief is imperfect. In contrast, our BCBF is synthesized from the full belief and therefore remains robust to state uncertainty. Moreover, decreasing the risk level enlarges the safety margin, as the BCBF drives $h^{\mathrm{gt}}_o$ further from zero, yielding stronger robustness. The computation time $t_c$ includes both QP construction and solution. In all cases the average $t_c$ remains below $5\,\mathrm{ms}$, demonstrating the real-time performance of our method.
    
We visualize one simulation run of our BCBF with tightened risk level ($\tau=0.05$, $\epsilon=0.04$) in Fig.~\ref{fig:exp_fov_A}. The objects move from the upper-left toward the lower-right. To maintain the objects within the FOV, the BCBF controller rotates the robot and drives it backward. We also plot the evolution of $\min \{ h^{\mathrm{L}}_b, h^{\mathrm{R}}_b \}$. The observed spikes are mainly caused by the discrete PHD filter update. Notably, the BCBF value evaluated at the tightened level $\tau - \epsilon$ may become negative, yet the BCBF value evaluated at the original risk level ($\tau=0.05$) remains nonnegative throughout. This illustrates that tightening the risk level can permit moderate belief changes caused by the discrete updates.

\begin{figure}[t]
\centering
\includegraphics[width=0.8\columnwidth]{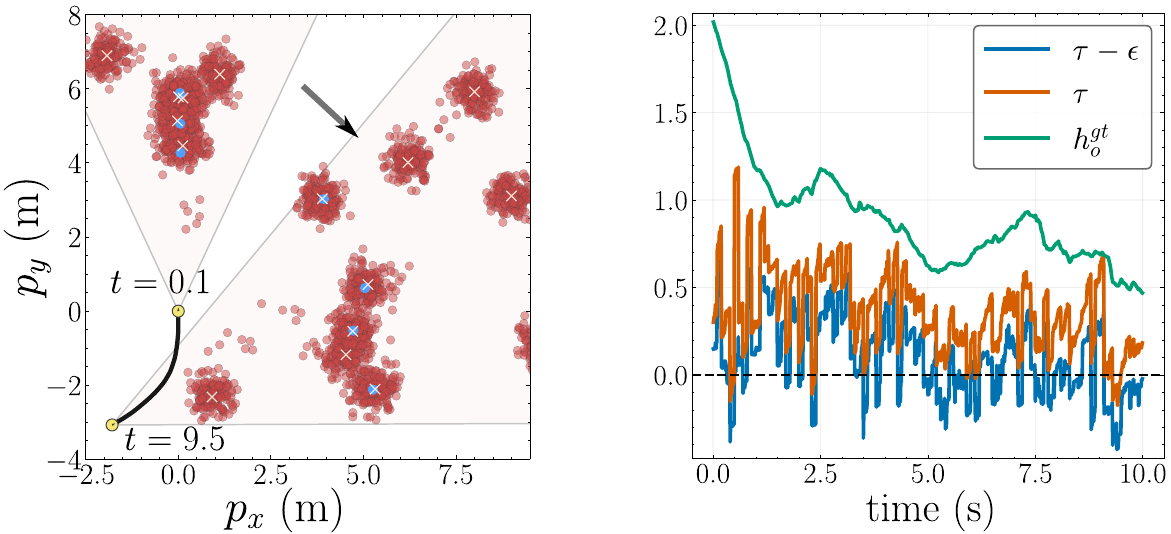}
\captionsetup{font=footnotesize}
\caption{ Same scenario as Fig.~\ref{fig:exp_fov_A}, but with a high false-alarm rate.
\textbf{Left}: Snapshots using the same conventions. The detections contain false alarms. \textbf{Right}: Same as Fig.~\ref{fig:exp_fov_A}.
}
\label{fig:exp_fov_B}
\vspace{-5mm}
\end{figure}

In practice, sensor measurements could contain false alarms caused by complex environmental effects, such as sand ripples in underwater scenes~\cite{williams2011adaptive}. To further demonstrate the robustness of our framework, we repeat the simulation with false alarms, modeled as a PPP with uniform intensity. The results are shown in Fig.~\ref{fig:exp_fov_B}. The false alarms induce additional particle birth, causing more frequent spikes in the BCBF values, occasionally driving them negative. Nevertheless, the PHD filter is still able to estimate the objects' states, and the safe controller keeps $h^{\mathrm{gt}}_o$ nonnegative throughout, maintaining the objects within the FOV.

\subsection{Obstacle Avoidance in Dynamic Environments}

\subsubsection{Simulation Setup and Baselines}
We consider the robot to be a 3D single integrator with state $\xb = [p_x, p_y, p_z]^\top \in \mathbb{R}^3$ denoting its position, and dynamics $\dot{\xb} = \ub$, where $\ub$ is the input velocity. The robot carries a depth sensor that returns position point clouds of the environment, with no prior knowledge of the obstacles' shapes. We build on the SMC-PHD in \cite{chen2024continuous}, which treats the point cloud directly as a set of objects and estimates their positions and velocities, yielding a unified representation of the environment and its motion. The trade-off is a large, time-varying object set, since points appear and disappear under sensor motion and occlusion --- precisely the setting where the RFS formulation is effective \cite{chen2024continuous}. Following this approach, the object dynamics and measurement model are
\begin{equation*}
    \dot{\bm{q}} = \bm{v}, \quad \dot{\bm{v}} = \bm{0}, \quad \zb = {}^p\bm{q} + \bm{\nu},
\end{equation*}
where $\bm{q}=[q_x, q_y, q_z]^\top$ and $\bm{v}=[v_x, v_y, v_z]^\top$ are the object position and velocity, ${}^p\bm{q}$ is the position in the robot's local frame, and $\bm{\nu} \sim \mathcal{N}(\boldsymbol{0}, \boldsymbol{R})$ is the measurement noise. We use PyBullet~\cite{coumans2021} to simulate the robot and the obstacles. The point cloud measurement is generated by ray-casting uniformly around the robot at a \SI{10}{\hertz} rate. In simulation, we assume that the robot has a full spherical FOV. We resample to a fixed $L=8000$ particles for the SMC-PHD filter. 

The single-object state space safety function is defined as 
\begin{equation*}
    h_o(\xb,\ob) := \|\,[p_x - q_x, p_y - q_y, p_z - q_z]\|_2 - r_{\mathrm{safe}},
\end{equation*}
where $r_{\mathrm{safe}}$ is the safe distance. We benchmark our BCBF against a CBF that uses the soft minimum distance between the robot and the point cloud~\cite{harms2025safe}, which considers point positions but not their motion, i.e., it implicitly treats $\bm{v} = \bm{0}$. Both methods use $\kappa=100$ for the soft minimum.

\begin{figure}[t]
\centering
\includegraphics[width=1.0\columnwidth]{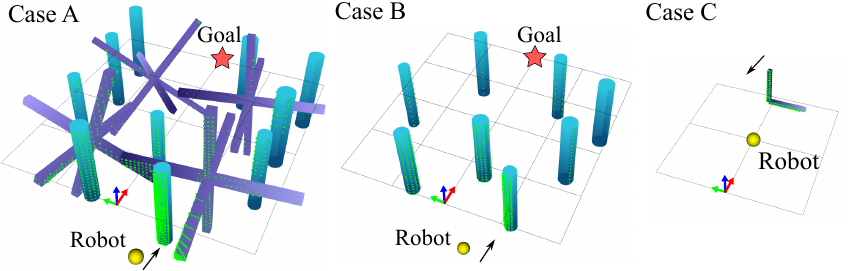}
\captionsetup{font=footnotesize}
\caption{Illustration of the obstacle avoidance settings in simulation, showing the robot (yellow), the obstacles (blue), and the depth point clouds (green).
\textbf{Case A:} static cross-shaped obstacles and cylinders moving in random directions at \SI{2.5}{\meter\per\second}.
\textbf{Case B:} a head-on scenario where cylindrical obstacles move toward the robot at $[-2.5, 0, 0]^\top$.
In both cases A and B, a reference controller drives the robot to the goal.
\textbf{Case C:} a head-on scenario where an L-shaped obstacle moves toward the robot at $[-1.0, 0, 0]^\top$, and a reference controller attempts to hold the initial position.
}
\label{fig:exp_obstacle_sce}
\end{figure}

\begin{table}[t]
\centering
\scriptsize
\captionsetup{font=footnotesize}
\caption{Performance comparison in different cases. The value in parentheses after ``Ours'' denotes the risk level. The success and collision rates need not sum to 100\%, as some runs time out.}
\label{tab:performance_comparison}
\setlength{\tabcolsep}{2pt}
\renewcommand{\arraystretch}{0.9}
\begin{tabular}{c|cccccc}
\toprule
    & Method & Coll.\ (\%) & Succ.\ (\%) & $t_n$ (s) & Avg. $t_c$ (ms) & Max.\ $t_c$ (ms) \\
\midrule
\multirow{3}{*}{A}
    & Ours ($0.05$) & $\mathbf{1.0}$ & $94.0$ & $4.47_{\pm 1.22}$ & $2.66_{\pm 0.13}$ & $10.70$ \\
    & Ours ($0.15$) & $\mathbf{1.0}$ & $\mathbf{95.0}$ & $4.38_{\pm 1.29}$ & $2.65_{\pm 0.12}$ & $9.10$ \\
    & \cite{harms2025safe} & $19.0$ & $80.0$ & $3.47_{\pm 1.04}$ & $1.51_{\pm 0.06}$ & $5.02$ \\
\midrule
\multirow{3}{*}{B}
    & Ours ($0.05$) & $\mathbf{0.0}$ & $92.0$ & $4.09_{\pm 1.72}$ & $2.43_{\pm 0.08}$ & $9.26$ \\
    & Ours ($0.15$) & $\mathbf{0.0}$ & $\mathbf{93.0}$ & $4.04_{\pm 1.44}$ & $2.42_{\pm 0.08}$ & $9.50$ \\
    & \cite{harms2025safe} & $24.0$ & $76.0$ & $2.43_{\pm 0.14}$ & $1.40_{\pm 0.04}$ & $4.74$ \\
\midrule
\multirow{3}{*}{C}
    & Ours ($0.05$) & $\mathbf{0.0}$ & $-$ & $-$ & $2.38_{\pm 0.04}$ & $8.44$ \\
    & Ours ($0.15$) & $1.0$ & $-$ & $-$ & $2.38_{\pm 0.04}$ & $8.67$ \\
    & \cite{harms2025safe} & $64.0$ & $-$ & $-$ & $1.38_{\pm 0.04}$ & $7.13$ \\
\bottomrule
\end{tabular}
\vspace{-5mm}
\end{table}

\subsubsection{Simulation Results}
We evaluate both methods on three test cases shown in Fig.~\ref{fig:exp_obstacle_sce}. In particular, case A presents a highly unstructured environment, mixing arbitrarily shaped static structures with moving obstacles. For cases A and B, we randomize the obstacles' initial poses and velocity directions. For case C, we only keep the randomness in the evolution of the PHD filter. We run 100 simulations for each case, and the results are reported in Table~\ref{tab:performance_comparison}. 

In cases A and B, our BCBF achieves a higher success rate and a lower collision rate while maintaining a reasonable time $t_n$ to reach the goal, reflecting the additional caution from anticipating obstacle motion. In contrast, the baseline~\cite{harms2025safe} ignores obstacle velocities and thus incurs more collisions. In case C, our BCBF again attains a lower collision rate. As shown by Fig.~\ref{fig:traj_caseC}, our method reacts earlier and avoids the obstacle, whereas the baseline collides. 

\begin{figure}[h]
\centering
\vspace{-3mm}
\includegraphics[width=0.8\columnwidth]{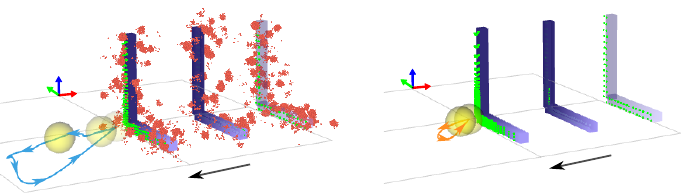}
\captionsetup{font=footnotesize}
\caption{Snapshots of one simulation run of case C at $t=2.4, 2.9, 3.4$, showing the particles of our method (red), the depth point clouds (green). The trajectories of our method and the baseline are shown in blue and orange, respectively. \textbf{Left:} Ours. \textbf{Right:}~\cite{harms2025safe}.
}
\label{fig:traj_caseC}
\vspace{-3mm}
\end{figure}

\subsubsection{Hardware Validation}
We validate our method on a BlueROV underwater robot equipped with a 3D sonar, with state estimation from motion capture measurements in a tank arena~\cite{torroba2026marinarium}. The sonar provides point cloud measurements at $\SI{5}{\hertz}$ within a $\SI{90}{\degree} \times \SI{40}{\degree}$ (vertical $\times$ horizontal) FOV, while the safe controller runs at $\SI{50}{\hertz}$. We follow the reduced-order hierarchical control framework of~\cite{cohen2024safety}, where a low-level controller tracks the input velocities, enabling us to give safe velocity commands at the kinematic level. 

We first recreate the head-on collision setting of case C in hardware, shown in Fig.~\ref{fig:hardware_headon}.
As in simulation, our method reacts early enough by anticipating the obstacle's motion, whereas the baseline~\cite{harms2025safe} becomes unsafe. Then, we test our method under adversarial teleoperation toward the obstacles at about $\SI{0.2}{\meter\per\second}$, as shown in Fig.~\ref{fig:hardware_validation}. Our safe controller modifies the reference to keep a safe distance from both static and moving obstacles, demonstrating real-world applicability in dynamic unstructured environments.

\begin{figure}[t]
\centering
\includegraphics[width=0.9\columnwidth]{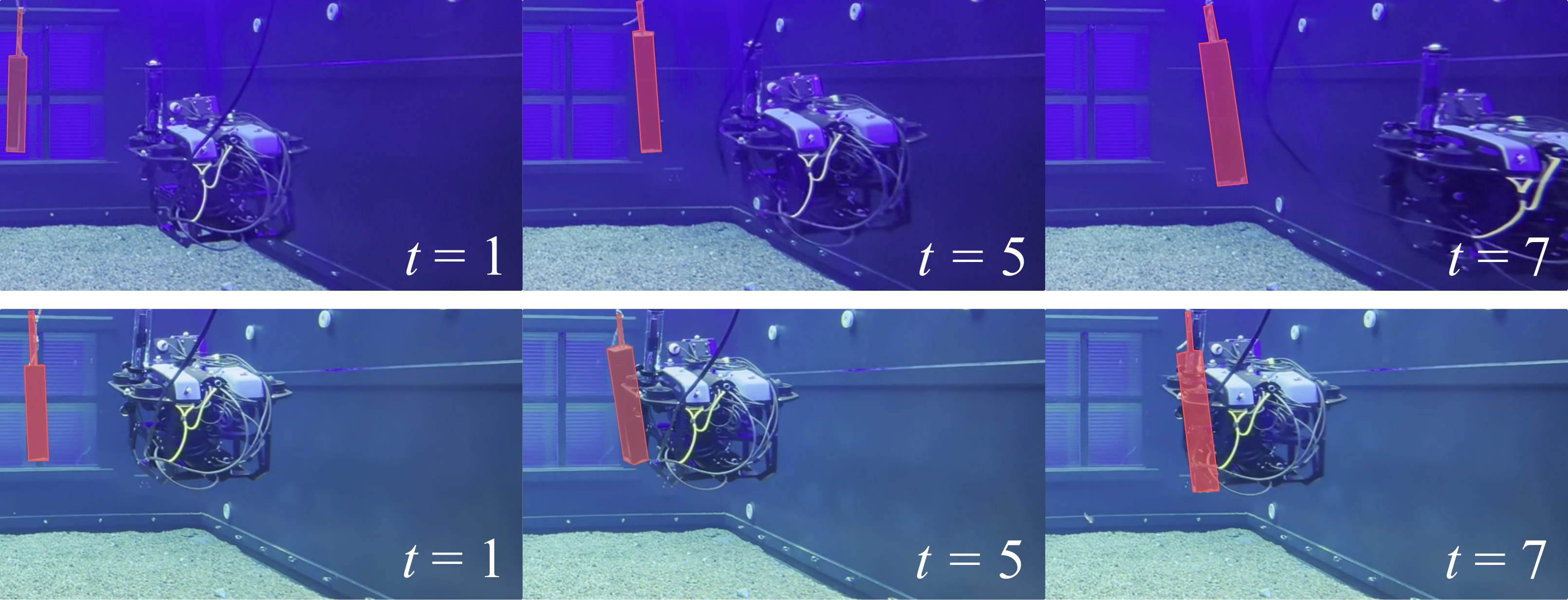}
\captionsetup{font=footnotesize}
\caption{Snapshots of one hardware run in the head-on collision scenario. The obstacle is highlighted with a red mask. \textbf{Upper:} Ours. \textbf{Lower:}~\cite{harms2025safe}.
}
\label{fig:hardware_headon}
\vspace{-5mm}
\end{figure}

\section{Conclusion}
In this work, we proposed a risk-aware BCBF framework for the particle-based beliefs produced by the SMC-PHD filter. The resulting safe controller remains computationally efficient even when the particle-based belief state is high-dimensional. We validated its robustness and adaptability in both simulation and hardware experiments. For future work, we intend to account for the mismatch between the estimated PPP belief and the true multi-object posterior, for example, through distributionally robust formulations~\cite{11312006,long2026sensor}. We also plan to move the computation onboard, enabling untethered safe operation in underwater environments.

\section*{Acknowledgement}
{\small We thank Matti Vahs, Ming Li, and Luzia Knoedler for the helpful discussions, and Julian Valdez for the hardware experiments.}
    
\bibliographystyle{IEEEtran}
\bibliography{references}

\end{document}